\newtheorem{theorem}{Theorem}
\newtheorem{lemma}{Lemma}
\newtheorem{remark}{Remark}
\newtheorem{assumption}{Assumption}
\title{\LARGE \bf
Unified Hierarchical MPC in Task Executing for Modular Manipulators across Diverse Morphologies
}
\author{Maolin Lei$^{1}$, Edoardo Romiti$^{1}$, Arturo Laurenzi$^{1}$, Cheng Zhou$^{2}$, Wanli Xing$^{3}$, \\  Liang Lu$^{1}$, and Nikos G. Tsagarakis$^{1}$
\thanks{*This work was supported by the European Union’s Horizon 2020 Research and Innovation
Program under Project CONCERT (Grant agreement No 101016007). }
\thanks{$^{1}$Humanoids and Human Centered Mechatronics Research Line, Istituto Italiano Di Tecnologia (IIT) Genoa, Italy. 
        {\tt\small name.surname@iit.it}}%
\thanks{$^{2}$ Tencent Robotics X Lab, ShenZhen, China
        }%
\thanks{$^{3}$ The University of Hong Kong, Hong Kong, China}
}
\begin{document}

\maketitle
\thispagestyle{empty}
\pagestyle{empty}

\begin{abstract}

This work proposes a unified Hierarchical Model Predictive Control (H-MPC) for modular manipulators across various morphologies, as the controller can adapt to different configurations to execute the given task without extensive parameter tuning in the controller. The H-MPC divides the control process into two levels: a high-level MPC and a low-level MPC. The high-level MPC predicts future states and provides trajectory information, while the low-level MPC refines control actions by updating the predictive model based on this high-level information. This hierarchical structure allows for the integration of kinematic constraints and ensures smooth joint-space trajectories, even near singular configurations. Moreover, the low-level MPC incorporates secondary linearization by leveraging predictive information from the high-level MPC, effectively capturing the second-order Taylor expansion information of the kinematic model while still maintaining a linearized model formulation. This approach not only preserves the simplicity of a linear control model but also enhances the accuracy of the kinematic representation, thereby improving overall control precision and reliability. To validate the effectiveness of the control policy, we conduct extensive evaluations across different manipulator morphologies and demonstrate the execution of pick-and-place tasks in real-world scenarios.

\end{abstract}

\section{INTRODUCTION}

Modular manipulators have various possible predefined modules, which were introduced~\cite{matsumaru1995design, zhang2006novel, yim2007modular, romiti2021toward}. These modules can be rapidly assembled into the desired morphology in just a few minutes, allowing for the adaptable configuration of the manipulator into various forms. Specifically, with $n$ different modules available, the total number of potential configurations grows factorially ($n!$). For instance, a system with 10 modules can generate 3,628,800 distinct configurations. The task of the modular manipulator is typically defined as a precise sequence of waypoints for the end-effector in task space~\cite{stilman2010global, berenson2011task, Zhou2024ValueRL}. 
To execute a diverse set of tasks across different manipulator platforms, it becomes essential to develop a unified control policy that can generalize across varying task specifications and hardware configurations. 



Model Predictive Control (MPC)  is a controller capable of receding horizon control by considering predictive future states to adjust the current input while incorporating constraints~\cite{qin1997overview, mayne2014model, kohler2018nonlinear}, thus improving the precision and adaptability of the control process. Thanks to its advantages, MPC has become a cornerstone in the field of robotics, particularly for controlling robotic manipulators \cite{faroni2018predictive, kohler2018nonlinear, wang2024hierarchical, lee2023real, tulbure2020closing, lei2022mpc}, due to its ability to predict future states, adaptively adjust control inputs, and rigorously incorporate constraints, making it highly effective for complex robotic tasks.

In manipulator control, MPC has been used to solve inverse kinematics (IK) for redundant robots~\cite{faroni2018predictive, kohler2018nonlinear}, incorporating kinematic constraints for precise task execution. To manage multiple tasks, weighted task hierarchies in MPC~\cite{bouyarmane2017weight, minniti2019whole} assign priority levels to different objectives, with higher weights for more critical tasks. To enhance robustness, learning-based MPC approaches~\cite{hewing2019cautious, nubert2020safe} approximate nonlinear models, improving adaptability while incorporating safety and dynamics. Methods addressing singularity management~\cite{wang2024hierarchical, lee2023real, tulbure2020closing} ensure smooth joint motion, even near challenging configurations. For deployment in dynamic environments, collision avoidance is integrated into MPC~\cite{lei2022mpc, gafur2021dynamic, gaertner2021collision, kramer2020model}, allowing predictive collision avoidance. Beyond kinematics, MPC has also been extended to optimize interaction forces between robots and their environment, as seen in work combining admittance control with MPC~\cite{wahrburg2016mpc, gold2022model}. Additionally, several studies have extended the use of controllers to execute specific tasks, such as pushing~\cite{bauza2018data}, dynamic transportation~\cite{zhou2022topp}, grasping~\cite{zhou2023differential}, and soft object manipulation~\cite{zhou2024bimanual}, demonstrating the benefits of tailored MPC implementations.

Designing a unified MPC for these manipulators is particularly challenging because the number of possible configurations grows factorially with the module count.
This exponential growth highlights the need to design adaptable MPC control strategies for modular manipulators, particularly those that can be quickly deployed across various morphologies. To address these challenges, this work employs the concept of Hierarchical Model Predictive Control (H-MPC) \cite{mansard2009directional} to manage the control of modular manipulators across different morphologies. Same to MPC\cite{faroni2018predictive, kohler2018nonlinear, lei2022mpc}, H-MPC incorporates kinematic constraints while also ensuring smooth joint-space motion, even near singular configurations. The H-MPC, however, differs from MPC in its two-level structure: a high-level MPC and a low-level MPC. The high-level MPC predicts the future state of the robot and provides this information to the low-level MPC, which then refines its control model and makes precise adjustments based on the high-level forecasts. Furthermore, a redundancy formalism is integrated into H-MPC to efficiently manage the degrees of freedom (DoFs) of modular manipulators, particularly for the non-redundant morphology. 

To facilitate real-time implementation on embedded robotic systems, the MPC formulation is often represented in a Quadratic Programming (QP) format, which can be solved efficiently. Typically, this is achieved by representing the relationship between control inputs and system states as a linear function. Our previous work~\cite{lei2022mpc} employed a linearized kinematic model to capture the relationship between joint states and the end-effector. 
In this work, we continue to employ a linearized kinematic model as the predictive model in both the high-level and low-level MPC to balance computational efficiency and real-time control requirements. The high-level MPC provides predictive state information, which is used to update the linearized formulation in the low-level MPC. Consequently, the linear predictive model in the low-level MPC implicitly captures second-order effects while retaining its linearized structure. This approach ensures that the low-level MPC maintains its QP form for efficient computation, while also enhancing the accuracy and reliability of the overall control strategy.

\textit{Contribution:} 
We developed an H-MPC controller specifically designed for modular manipulators, which adapts to various morphologies without the need for parameter tuning. This controller integrates kinematic constraints and effectively smooths trajectories near singular configurations to enhance precision. Moreover, the low-level MPC component of our H-MPC includes second-order dynamics in the state-space equations while maintaining a linear formulation, allowing for more accurate control of manipulator dynamics. Through extensive experiments and simulations, we demonstrated the effectiveness and advantages of our H-MPC.

\section{Preliminaries}

\subsection{Task Trajectory Generation} ~\label{task_traj_gen}
\label{sec:trajectory_generation}
The desired task of the manipulator is represented as a trajectory in the task (Cartesian) space, defined by a series of waypoints that the manipulator's end-effector must follow~\cite{stilman2010global, berenson2011task}. The trajectory generation process comprises two primary components: the position trajectory and the orientation trajectory. The position trajectory is generated first using polynomial interpolation, followed by the interpolation of the orientation trajectory based on the position trajectory results to ensure synchronized and smooth motion.

\textit{Position Trajectory}: 
The position trajectory is obtained by interpolating between waypoints using polynomial interpolation. This method ensures smooth transitions between waypoints while adhering to constraints on velocity, acceleration, and workspace boundaries.
\begin{equation}
\bm{p}(t) = \sum_{i=0}^{n} \bm{a}_i t^i,
\label{eq:position_trajectory}
\end{equation}
where,
 \(\bm{p}(t)\) is the position at time \(t\), \(\bm{a}_i\) are the coefficients determined to satisfy the waypoint positions and boundary conditions; \(n\) is the degree of the polynomial.

\textit{Orientation Trajectory}: 
The orientation trajectory ensures a smooth and continuous transition from the initial orientation \(\mathbf{R}_{\text{in}}\) to the desired orientation \(\mathbf{R}_d\) using exponential mapping. This method operates within the \(SO(3)\) group, providing smooth interpolation between orientations.
\begin{equation}
\mathbf{R}(t) = \mathbf{R}_{\text{in}} \exp\left( \frac{t}{T} \log\left( \mathbf{R}_{\text{in}}^{-1} \mathbf{R}_\text{d} \right) \right),
\label{eq:orientation_trajectory}
\end{equation}
where \(\mathbf{R}(t)\) is the orientation at time \(t\); \(t \in [0, T]\); \(T\) is the total duration; \(\exp(\cdot)\) and \(\log(\cdot)\) denote the matrix exponential and logarithm, respectively. The total \(T\) can be obtained once the position trajectory generate. 

\subsection{Model}
The internal state \(\bm{x}_e \in \mathbb{R}^{13}\) and its derivation is given by:
\[
\bm{x}_e := [\bm{p}\ \ \bm{o} \ \ \bm{\dot{p}} \ \ \bm{\omega}]^T \in \mathbb{R}^{13}., \ \
\bm{\dot{x}}_e := [\bm{\dot{p}}\ \ \bm{\dot{o}} \ \ \bm{\dot{v}} \ \ \bm{\dot{\omega}}]^T \in \mathbb{R}^{12}.
\]
where \(\bm{p} \in \mathbb{R}^{3}\) is position, \(\bm{o} \in \mathbb{R}^{4}\) is the orientation quaternion, \(\bm{\dot{p}} \in \mathbb{R}^{3}\) is linear velocity, and \(\bm{\omega} \in \mathbb{R}^{3}\) is angular velocity for the end-effectror state. 
The quaternion derivative is
\(
\dot{\bm{o}}(t) = \frac{1}{2} \mathbf{G}(\bm{o}_k) \bm{\omega}_k \)
where
\(
\mathbf{G}(\bm{o}) = \begin{bmatrix}
-\bm{\epsilon}^\top, 
\eta \bm{I}_3 + \hat{\bm{\epsilon}}
\end{bmatrix}^{T},
\)
with \(\hat{\bm{\epsilon}}\) being the skew-symmetric matrix of \(\bm{\epsilon}\), and \(\eta\) the scalar part of the quaternion.


The primary objective of the controller is to accurately track the end-effector's reference trajectory by managing the manipulator's joints. 
The core of this process involves a first-order Taylor expansion of the manipulator's nonlinear kinematics around the current joint positions and velocities, establishing a relationship between the joint states and the end-effector's velocities and accelerations \cite{sciavicco2012modelling}. The derivation process is articulated as follows:

\begin{equation}
\underbrace{
\begin{bmatrix}
\bm{v}_e \\
\dot{\bm{v}}_e
\end{bmatrix}
}_{\substack{\text{End-effector kinematics} \\
\dot{\bm{x}}_e \in \mathbb{R}^{12}}}
=
\underbrace{
    \begin{bmatrix}
        \mathbf{J} (\bm{q})  & \bm{0}_{6 \times n_j} \\
        \dot{\mathbf{J}} (\bm{q},\bm{\dot{q}}) & \mathbf{J} (\bm{q}) 
    \end{bmatrix}
}_{\substack{\text{Kinematic mapping matrix} \\
\mathbf{B}_{\text{kin}} (\bm{q},\bm{\dot{q}}) \in \mathbb{R}^{12 \times 2n_j}}
}\ \
\underbrace{
\begin{bmatrix}
\dot{\bm{q}} \\
\ddot{\bm{q}}
\end{bmatrix}
}_{\substack{\text{Joint velocity and accerelation} \\ \bm{u}_{\text{kin}} \in \mathbb{R}^{2n_j}}},
\label{eq:kinematics_model}
\end{equation}
where \(\bm{v}_e \in \mathbb{R}^6\) consists of both the linear and angular velocities of the end-effector, and \(\dot{\bm{v}}_e \in \mathbb{R}^6\) represents the corresponding linear and angular accelerations.  \(\mathbf{J}(\bm{q}) \in \mathbb{R}^{6 \times n_j}\) represents the Jacobian of the manipulator in \( \bm{q}\) state  while \(\dot{\mathbf{J}} (\bm{q}, \bm{\dot{q}}) \in \mathbb{R}^{6 \times n_j}\) represent the time derivative of the \(\mathbf{J}\) in \(\bm{\dot{q}}\) and \(\bm{q}\) state.

\section{H-MPC Formulation} \label{MPC_formulation}
\subsection{MPC Formulation}

Combining the kinematic model in Eq.~\ref{eq:kinematics_model} with forward Euler integration using a step size of \(dt\), the discretized state-space equation for predicting the end-effector state \(\bm{x}_{e,k+1}\) at time step \(k+1\) from the state \(\bm{x}_{e,k}\) at time step \(k\) is given by:
\begin{equation}
    \bm{x}_{e,k+1} = \bm{x}_{e,k} + \underbrace{\mathbf{B}_{\text{e,k}} \mathbf{B}_{\text{kin,k}}(\bm{q}_k,\bm{\dot{q}}_k) \bm{u}_{\text{kin},k}}_{\delta \bm{x}_k  \approx \mathbf{B}_{\text{kin,k}}(\bm{q}_k,\bm{\dot{q}}_k) \delta \bm{u}_{\text{kin},k}} 
    \label{eq:MPC_kinematicals_discrete}
\end{equation}
where 
\(
\bm{u}_{\text{kin},k} = \begin{bmatrix} \dot{\bm{q}}_k \ \ \ddot{\bm{q}}_k \end{bmatrix} ^T\in \mathbb{R}^{2n_j},
\)
and
\(
\mathbf{B}_{\text{e,k}} = \operatorname{diag} \left[ \mathbf{I} \, dt, \, \frac{1}{2} \mathbf{G}(\bm{o}_k) \, dt, \, \mathbf{I} \, dt, \, \mathbf{I} \, dt \right] \in \mathbb{R}^{12 \times 12},
\)
with \(\mathbf{I}\) denoting the identity matrix. In MPC, the objective is to determine the optimal sequence of control inputs \(\{\bm{u}_{\text{kin},k}\}_{k=1}^N\) that minimizes the cumulative tracking error of the end-effector's trajectory over a prediction horizon \(N\). This optimization problem is formulated as:
\begin{align} 
    \min_{\{\bm{u}_{\text{kin},k}\}} \quad & \sum_{k=1}^{N} \left( \| \bm{x}_{e,k+1} - \bm{x}_{e,\text{ref},k+1} \|_{\textbf{Q}_k}^2 + \| \bm{u}_{\text{kin},k} \|_{\textbf{R}_k}^2 \right) \label{eq:MPC_cost} \\
    \text{s.t.} \quad 
    & \bm{x}_{e,k+1} = \bm{x}_{e,k} + \mathbf{B}_{\text{e},k} \mathbf{B}_{\text{kin},k}(\bm{q}_k, \dot{\bm{q}}_k) \\ 
    & \bm{q}_l \leq \mathbf{E} \, dt \, \dot{\bm{q}}_k + \bm{q}_0 \leq \bm{q}_u \\
    & \dot{\bm{q}}_{\min} \leq \dot{\bm{q}}_k \leq \dot{\bm{q}}_{\max}, \\
    & \ddot{\bm{q}}_{\min} \leq \ddot{\bm{q}}_k \leq \ddot{\bm{q}}_{\max}~\label{eq:joint_acc_con}, \\
     & \forall k = 1, \ldots, N,
\end{align}
where \(\textbf{Q}_k \in \mathbb{R}^{12 \times 12}\) and \(\textbf{R}_k \in \mathbb{R}^{2n_j \times 2n_j}\) are positive semi-definite weight matrices that penalize state errors and control inputs, respectively. The constraints include the limitation on joint positions, velocities, and accelerations to prevent the manipulator. In the MPC, The manipulator's kinematic model is linearized around the current kinematic state, and this linearized model remains valid throughout the entire prediction horizon \(N\). The regularization term \(\|\bm{u}_{\text{kin},k}\|_{\textbf{R}_k}\) penalizes large joint velocities and accelerations to avoid oscillatory joint-space motions near singularities where the inverse of \(\mathbf{B}_{\text{kin,k}}\) approaches infinity. 

\subsection{High-level MPC and Low-level MPC in H-MPC}
The task \(\mathcal{T}\) can be decomposed into six sub-tasks, corresponding to tracking the desired position and orientation along the \(x\), \(y\), and \(z\) axes, with \(\operatorname{dim}(\mathcal{T}) = 6\). For a redundant morphology—characterized by having more DoFs, denoted as \(\operatorname{dim}(\mathcal{R})\), than required by the task space (\(\operatorname{dim}(\mathcal{R}) > \operatorname{dim}(\mathcal{T})\) )—it is feasible to deploy the MPC directly, as the manipulator has sufficient freedom to execute directionally distinct sub-tasks without conflicts. Conversely, for non-redundant morphologies, where manipulators possess fewer DoFs than the dimensions of the task space, (\(\operatorname{dim}(\mathcal{R}) < \operatorname{dim}(\mathcal{T})\)), their ability to simultaneously execute sub-tasks in six directions is limited~\cite{gupta1986nature}. Employing the same MPC controller designed for redundant systems on non-redundant robots may lead to conflicts when executing directionally distinct tasks. To address this limitation, non-redundant manipulators can still be controlled using a redundancy-based approach known as \textit{functional redundancy} \cite{mansard2009directional}, which involves prioritizing sub-tasks into high-priority and secondary-priority categorie with the assumption that \textit{the number of high-priority sub-tasks and secondary-priority sub-tasks, denoted as \(\operatorname{dim}(\mathcal{T}_H)\) and \(\operatorname{dim}(\mathcal{T}_L)\), are both less than the manipulator's degrees of freedom, i.e., 
\(
\operatorname{dim}(\mathcal{T}_H) < \operatorname{dim}(\mathcal{R})
\) and
\(
\operatorname{dim}(\mathcal{T}_L) < \operatorname{dim}(\mathcal{R}).
\)} Building on redundancy control formulations, we apply the H-MPC to manage manipulators with varying morphologies using a redundancy-based control strategy. The H-MPC divides the control process into two levels: a high-level MPC and a low-level MPC. In the high-level MPC, the controller focuses solely on high-priority sub-tasks \(\mathcal{T}_H\), ensuring it operates within the available degrees of freedom. Under the assumption that \(
\operatorname{dim}(\mathcal{T}_H) < \operatorname{dim}(\mathcal{R}),
\) the high-level MPC functions effectively as a redundancy controller. At the low-level MPC, the controller incorporates information from the high-level MPC to refine its linearized model using the predicted state provided by the high-level MPC.

\begin{figure*}[h]
  \centering
\includegraphics[width=0.90\textwidth]{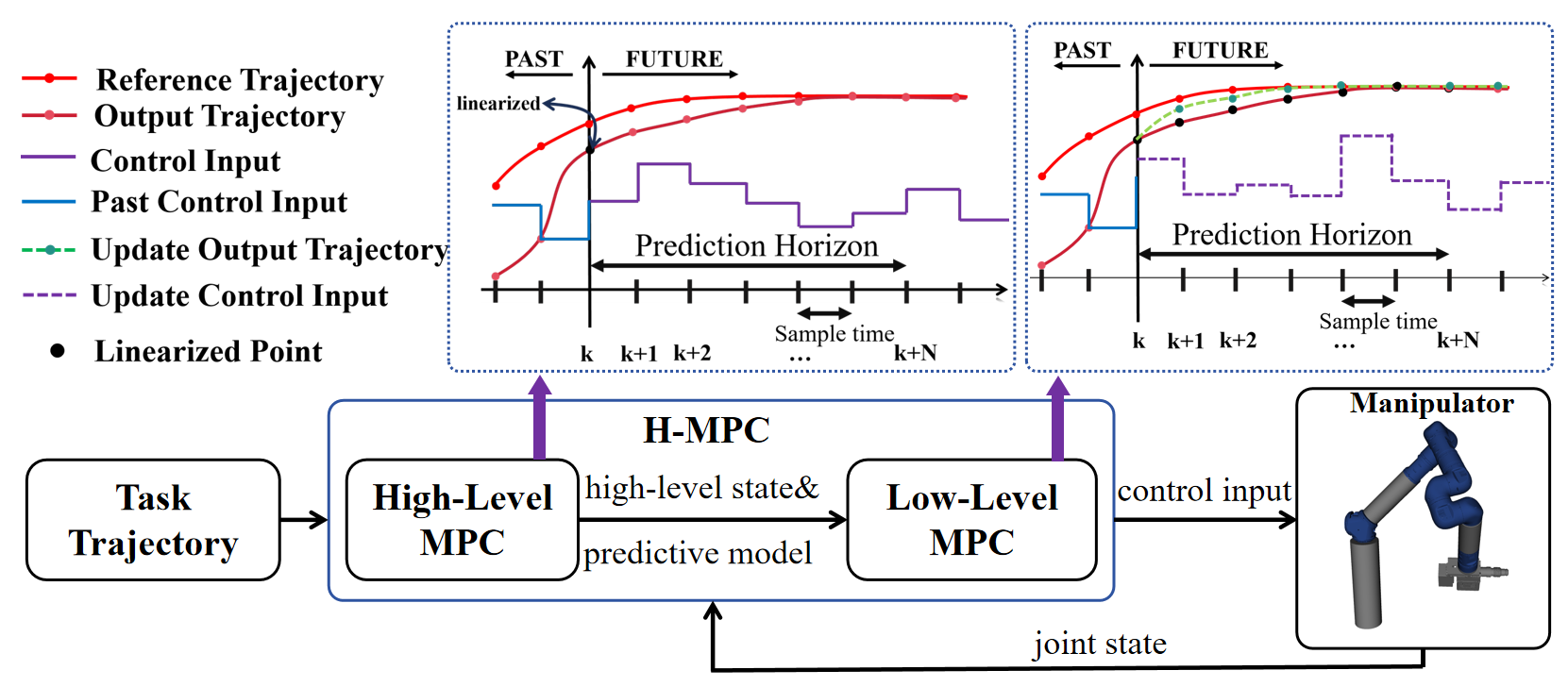}
 \vspace{-0.25cm}
  \caption{The concept of H-MPC and the overall control scheme for controlling the manipulator. In the high-level MPC, the model is linearized at the current state (represented by the black dot in the top middle of the figure). The control inputs at each step are then calculated, resulting in the trajectory shown as a purple line in the figure. In the low-level MPC, the model is linearized along the trajectory output by the high-level MPC (depicted by the black dots in the top right figure). The low-level MPC computes new control inputs based on this updated linearized model, producing the trajectory indicated by the purple dotted line in the top right figure.
  }
  \label{Fig:HMPC}
  \vspace{-0.65cm}
\end{figure*}

\subsubsection{High-Level MPC Formulation}
The high-priority MPC focus exclusively on the high-priority tasks without considering secondary tasks. The decision variable \( \bm{u}_{\text{kin},k}^{(1)} \) is the high-priority control input, and the optimization problem is formulated as follows:
\begin{equation}
\begin{aligned}
    \min_{\bm{u}_{\text{kin}}^{(1)}} \quad & \sum_{k=1}^{N_h} \left( \left\| \bm{x}_{e,k+1}^{(1)} - \bm{x}_{e,\text{ref},k+1}^{(1)} \right\|_{\bm{Q}_k^{(1)}}^2 + \left\| \bm{u}_{\text{kin},k}^{(1)} \right\|_{\mathbf{R}_k}^2 \right) 
    \\
    \text{s.t.}:  & \bm{x}_{e,k+1}^{(1)} = \bm{x}_{e,k}^{(1)} + \mathbf{B}_{\text{e},k} \mathbf{B}_{\text{kin},k}^{(1)} (\bm{q}_1,\bm{\dot{q}}_1) \bm{u}_{\text{kin},k}, \\ 
     & \text{kinematical constraints} \\ 
     & \forall k=1,\ldots,N_h
\end{aligned}
\label{eq:highlevel_MPC}
\end{equation}
where the superscript \( (1) \) indicates association with the high-priority tasks in the high-level layer of the hierarchical MPC; \(N_h\) represents the horizon of the high-level MPC; the gains for performing sub-tasks \( \bm{Q}_k^{(1)} \)
defines the weighting matrix for the high-priority tasks. Here, \( \bm{Q}_k \) represents the base weighting matrix, and \( \bm{P} \) is a binary vector, where each element specifies the inclusion (1) or exclusion (0) of the corresponding task direction in the high-priority MPC. The operation \( \text{diag}(\bm{P}) \) constructs a diagonal matrix from \( \bm{P} \), thus selectively weighting the tasks to prioritize them appropriately in the optimization problem.  The formulation high-level MPC as same as the MPC formulation with assume that the manipulator's kinematic model remains constant throughout the entire prediction horizon while the predictive state-space function is linearized in initial joint state of the manipulator. The high-level MPC formulation can final formulate it as the optimization problem to the QP formulation.



\subsubsection{Low-Level MPC Formulation}
With established priorities for various sub-tasks, the high-level MPC is designed to determine the necessary joint velocities and accelerations to execute the high-priority tasks effectively. Then, the low-level MPC is able to written as: 
\begin{align}
\min_{\bm{u}_{\text{kin}}^{(2)}} \quad 
& \sum_{k=1}^{N_l} \left( 
    \left\| \bm{x}_{e,k+1}^{(2)} - \bm{x}_{e,\text{ref},k+1}^{(2)} \right\|_{\mathbf{Q}_k^{(2)}}^2 
    + \left\| \bm{u}_{\text{kin},k}^{(2)} \right\|_{\mathbf{R}_k}^2 
\right) \nonumber \\
\text{s.t.} \quad 
& \bm{x}_{e,k+1}^{(2)} = \bm{x}_{e,k}^{(2)} + \mathbf{B}_{\text{e},k}^{(2)} 
\mathbf{B}_{\text{kin},k}^{(2)}(\bm{q}_{k}^{(1)}, \bm{\dot{q}}_{k}^{(1)}) 
\bm{u}_{\text{kin},k}^{(2)}, \label{eq:low_level_MPC_dyn} \\
& \bm{\dot{q}}_{e,k+1}^{(1)} = \bm{\dot{q}}_{e,k}^{(1)} + 
\bm{\ddot{q}}_{e,k}^{(1)} \Delta t, \label{eq:low_level_MPC_qdot} \\
& \bm{q}_{e,k+1}^{(1)} = \bm{q}_{e,k}^{(1)} + 
\bm{\dot{q}}_{e,k}^{(1)} \Delta t + 
0.5 \bm{\ddot{q}}_{e,k}^{(1)} \Delta t^2, \label{eq:low_level_MPC_q} \\
& \mathbf{B}_{\text{kin},k}^{(2)}(\bm{q}_{k}^{(1)}, \bm{\dot{q}}_{k}^{(1)}) 
\bm{u}_{\text{kin},k}^{(2)} = 
\mathbf{B}_{\text{kin},k}^{(1)}(\bm{q}_{1}^{(1)}, \bm{\dot{q}}_{1}^{(1)}) 
\label{eq:low_level_MPC}
\end{align}
where the superscript \( (2) \) denotes that the variable or parameter is associated with the secondary tasks in the second layer of the hierarchical MPC and \(N_l\) represents the horizon of the low-level MPC. In the low-level MPC, the high-level MPC's optimal results will serve as a constrained state input for the low-level MPC across each prediction horizon. This integration ensures that the low-level MPC can optimize joint states while taking into account the high-level control outcomes. Moreover, as illustrated in Fig.~\ref{Fig:HMPC} top right, the low-level MPC dynamically linearized the kinematic model at each timestep based on the high-level output. These iterative model updates enhance the accuracy of the predictive model by incorporating second-order Taylor expansions of the kinematic relationships between the end-effector and the joint states. Consequently, although the state-space representation of the low-level MPC remains linear, the precision of the predictions is significantly improved. This approach allows the low-level MPC to handle higher-order nonlinearities in the model while preserving the computational efficiency of linear models. We will provide a formal proof of this methodology in follow content. 
The pseudo algorithm of the HMPC can obtain in Alg.~\ref{alg:H_MPC_overall}.

\begin{algorithm}
\caption{Hierarchical MPC }
\label{alg:H_MPC_overall}
\KwRequire{
    Initial end-effector $\bm{x}_{e,0}$ and joint states $ \bm{q}_0$; \\
    Reference trajectory $\bm{p}_{\text{ref}}(t), \mathbf{R}_{\text{ref}}(t)$; \\
    Prediction horizons $N_h, N_l$; \\
    Control time step $\Delta t$; \\
    Total control iterations $n = T / \Delta t$.
}

\For{$k \gets 0$ \textbf{to} $n-1$}{
    Get end-effector state $\bm{x}_{e,k}$ based on $\bm{q}_k$ and $\dot{\bm{q}}_k$\;
    
    Obtain kinematic mapping $\mathbf{B}_{\text{kin},k}$ based on $\bm{q}_k$ and $\dot{\bm{q}}_k$\;
    
    Solve high-level MPC in (Eq.~\ref{eq:highlevel_MPC}) to obtain $\bm{u}_{\text{kin},k}^{(1)}$\;
    
    Update joint states  $\bm{q}_{k+i}^{(1)}$, $\dot{\bm{q}}_{k+i}^{(1)}$ base on $\bm{u}_{\text{kin},k}^{(1)}$\;
    
    \For{$i \gets 1$ \textbf{to} $N_l-1$}{
        Update $\mathbf{B}_{\text{kin},k+i}$ based on $\bm{q}_{k+i}^{(1)}$, $\dot{\bm{q}}_{k+i}^{(1)}$\;
    }

    Solve low-level MPC (Eq.~\ref{eq:low_level_MPC}) to obtain $\bm{u}_{\text{kin},k}^{(2)}$\;
    
    Apply Control Input $\bm{u}_{\text{kin},k}^{(2)}$\;

    Update $\bm{q}_k$ and $\dot{\bm{q}}_k$\;
}
\end{algorithm}
\textit{Proof for Implicit Inclusion of Second-Order Terms in Velocity Linearized Mapping Function:}

In the low-level MPC, the Jacobian at each prediction step is updated using the predicted joint positions from the high-level MPC:
\begin{equation}
\mathbf{J}_{k+i} = \mathbf{J}\left(\bm{q}_{k+i}^{(1)}\right), \quad \text{for } i = 0, \dots, N_l - 1,
\end{equation}
where $\bm{q}_{k+i}^{(1)}$ are the predicted joint positions from the high-level MPC. Our goal is to prove that updating the Jacobian in this manner implicitly includes second-order terms from the Taylor expansion of the end-effector position.

We begin by performing a Taylor expansion of $\mathbf{J}_{k+i}$ around $\bm{q}_k$:
\begin{equation}
\mathbf{J}_{k+i} \approx \mathbf{J}_k + \left. \dfrac{\partial \mathbf{J}}{\partial \bm{q}} \right|_{\bm{q}_k} \Delta \bm{q}_{k+i}^{(1)} + \mathcal{O}\left(\|\Delta \bm{q}_{k+i}^{(1)}\|^2\right),
\label{eq:jacobian_expansion}
\end{equation}
where $\Delta \bm{q}_{k+i}^{(1)} = \bm{q}_{k+i}^{(1)} - \bm{q}_k$, and $\dfrac{\partial \mathbf{J}}{\partial \bm{q}}$ is a third-order tensor representing the derivative of the Jacobian with respect to the joint positions.

The low-level MPC uses the following model for predicting the end-effector position, with the assumption that $\Delta {\bm{q}}_{k+i} \approx \dot{\bm{q}}_{k+i} \Delta t$:
\begin{equation}
\bm{x}_{e,k+i+1}^{(2)} = \bm{x}_{e,k+i}^{(2)} + \mathbf{J}_{k+i} \Delta \bm{q}_{k+i}^{(2)},
\label{eq:second_mpc_model}
\end{equation}
where $\Delta \bm{q}_{k+i}^{(2)} = \bm{q}_{k+i+1}^{(2)} - \bm{q}_{k+i}^{(2)}$.

In the hierarchical MPC, the high-level MPC computes the primary joint position changes $\Delta \bm{q}_{k+i}^{(1)}$ to achieve high-priority tasks. The low-level MPC then calculates adjustments $\Delta \bm{q}_{k+i}^{(2)}$ to address secondary objectives while minimally affecting the primary tasks.
To relate the terms in the updated Jacobian in the low-level MPC and the real Taylor expansion of the second-order term, we make the following assumption.

\begin{assumption}
    \textit{The total joint position change $\Delta \bm{q}_{k+i+1}$ can be approximated by the sum of the predicted changes from the high- and low-level MPCs:}
    \begin{equation}
    \Delta \bm{q}_{k+i+1} \approx \Delta \bm{q}_{k+i}^{(1)} + \Delta \bm{q}_{k+i}^{(2)}.
    \label{eq:assumption1}
    \end{equation}
\end{assumption}

Such a decomposition is critical for the subsequent error analysis and demonstrates the implicit inclusion of second-order terms in the low-level MPC model.
Therefore, the combined joint position at time $k+i+1$ is:
\begin{equation}
\bm{q}_{k+i+1} = \bm{q}_k + \Delta \bm{q}_{k+i+1} = \bm{q}_k + \Delta \bm{q}_{k+i}^{(1)} + \Delta \bm{q}_{k+i}^{(2)}.
\end{equation}

Considering the Taylor expansion up to the second-order term of the end-effector's position $\bm{x}_e$ around $\bm{q}_k$:
\begin{align}
\bm{x}_{e,k+i+1} =\, & \bm{x}_{e,k} + \mathbf{J}_k \Delta \bm{q}_{k+i+1} \nonumber + \underbrace{\dfrac{1}{2} \Delta \bm{q}_{k+i+1}^\top \mathbf{H}_k \Delta \bm{q}_{k+i+1}}_{\text{Second-order nonlinear term}} \\
& 
+ \mathcal{O}\left( \| \Delta \bm{q}_{k+i+1} \|^3 \right),
\label{eq:taylor_expansion}
\end{align}

where $\bm{H}_k = \left. \dfrac{\partial^2 \bm{x}_e}{\partial \bm{q}^2} \right|_{\bm{q}_k}$ is the Hessian matrix.

Substituting $\Delta \bm{q}_{k+i+1}$ from Assumption~\ref{eq:assumption1} into the Taylor expansion, we can compare the terms in the low-level MPC model \eqref{eq:second_mpc_model} and the Taylor expansion \eqref{eq:taylor_expansion}.

Substituting the expanded Jacobian from \eqref{eq:jacobian_expansion} into the low-level MPC model \eqref{eq:second_mpc_model}, we obtain:
\begin{align}
\bm{x}_{e,k+i+1}^{(2)} \approx\, & \bm{x}_{e,k+i}^{(2)} + 
\left[ \mathbf{J}_k + \left. \dfrac{\partial \mathbf{J}}{\partial \bm{q}} 
\right|_{\bm{q}_k} \Delta \bm{q}_{k+i}^{(1)} \right] 
\Delta \bm{q}_{k+i}^{(2)} \nonumber \\
=\, & \bm{x}_{e,k+i}^{(2)} + \mathbf{J}_k \Delta \bm{q}_{k+i}^{(2)} 
+ \left( \left. \dfrac{\partial \mathbf{J}}{\partial \bm{q}} 
\right|_{\bm{q}_k} \Delta \bm{q}_{k+i}^{(1)} \right) 
\Delta \bm{q}_{k+i}^{(2)} \nonumber \\
& + \mathcal{O}\left( \| \Delta \bm{q} \|^3 \right),
\label{eq:model_with_updated_jacobian}
\end{align}

Comparing equations \eqref{eq:taylor_expansion} and \eqref{eq:model_with_updated_jacobian}, we see that the term $\left( \left. \dfrac{\partial \mathbf{J}}{\partial \bm{q}} \right|_{\bm{q}_k} \Delta \bm{q}_{k+i}^{(1)} \right) \Delta \bm{q}_{k+i}^{(2)}$ in the low-level MPC model corresponds to part of the second-order term $\dfrac{1}{2} \Delta \bm{q}_{k+i+1}^\top \bm{H}_k \Delta \bm{q}_{k+i+1}$ in the Taylor expansion.

To analyze the error between the two models, we define the error term $\bm{E}_{k+i}$ as the difference between the second-order term in the Taylor expansion and the corresponding term in the low-level MPC model:
\begin{equation}
\bm{E}_{k+i} = \dfrac{1}{2} \Delta \bm{q}_{k+i+1}^\top \bm{H}_k \Delta \bm{q}_{k+i+1} - \left( \left. \dfrac{\partial \mathbf{J}}{\partial \bm{q}} \right|_{\bm{q}_k} \Delta \bm{q}_{k+i}^{(1)} \right) \Delta \bm{q}_{k+i}^{(2)}.
\end{equation}

Substituting $\Delta \bm{q}_{k+i+1} = \Delta \bm{q}_{k+i}^{(1)} + \Delta \bm{q}_{k+i}^{(2)}$, we expand the error term:
\begin{align}
\bm{E}_{k+i} =\, & \dfrac{1}{2} \left( \Delta \bm{q}_{k+i}^{(1)} 
+ \Delta \bm{q}_{k+i}^{(2)} \right)^\top \bm{H}_k 
\left( \Delta \bm{q}_{k+i}^{(1)} + \Delta \bm{q}_{k+i}^{(2)} \right) \nonumber \\
& - \left( \left. \dfrac{\partial \mathbf{J}}{\partial \bm{q}} 
\right|_{\bm{q}_k} \Delta \bm{q}_{k+i}^{(1)} \right) 
\Delta \bm{q}_{k+i}^{(2)} \nonumber \\
=\, & \dfrac{1}{2} \Big[ 
\Delta \bm{q}_{k+i}^{(1)\top} \bm{H}_k \Delta \bm{q}_{k+i}^{(1)} 
+ 2 \Delta \bm{q}_{k+i}^{(1)\top} \bm{H}_k \Delta \bm{q}_{k+i}^{(2)} \nonumber \\
& \quad + \Delta \bm{q}_{k+i}^{(2)\top} \bm{H}_k \Delta \bm{q}_{k+i}^{(2)} 
\Big] 
- \Delta \bm{q}_{k+i}^{(1)\top} \bm{H}_k \Delta \bm{q}_{k+i}^{(2)}
\label{eq:error_term_expansion}
\end{align}

Simplifying, we get:
\begin{align}
\bm{E}_{k+i} =\, & \dfrac{1}{2} \Delta \bm{q}_{k+i}^{(1)\top} \bm{H}_k \Delta \bm{q}_{k+i}^{(1)} 
- \dfrac{1}{2} \Delta \bm{q}_{k+i}^{(1)\top} \bm{H}_k \Delta \bm{q}_{k+i}^{(2)} \nonumber \\
& + \dfrac{1}{2} \Delta \bm{q}_{k+i}^{(2)\top} \bm{H}_k \Delta \bm{q}_{k+i}^{(2)}.
\label{eq:error_term_simplified}
\end{align}

We can now bound the error $\bm{E}_{k+i}$.

\begin{lemma}
Under the assumption that the joint position changes $\Delta \bm{q}_{k+i}^{(1)}$ and $\Delta \bm{q}_{k+i}^{(2)}$ are small, and that the Hessian $\bm{H}_k$ is bounded, the error $\bm{E}_{k+i}$ is of order $\mathcal{O}\left( \| \Delta \bm{q} \|^2 \right)$.
\end{lemma}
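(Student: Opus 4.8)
The plan is to start directly from the already-simplified error expression \eqref{eq:error_term_simplified}, which writes $\bm{E}_{k+i}$ as a sum of three quadratic contractions of the Hessian with the high- and low-level increments. Since each term is bilinear in the increment vectors $\Delta\bm{q}_{k+i}^{(1)}$ and $\Delta\bm{q}_{k+i}^{(2)}$, the whole strategy reduces to bounding each contraction by the product of the norms of the two vectors that enter it and then collecting the three bounds via the triangle inequality.

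First I would make the boundedness assumption precise. Because $\bm{x}_e$ is vector-valued, $\bm{H}_k$ is really a third-order tensor whose $j$-th slice $\bm{H}_k^{(j)} = \partial^2 x_{e,j}/\partial\bm{q}^2$ is a symmetric matrix; the hypothesis that ``the Hessian is bounded'' I would read as the existence of a constant $M<\infty$ with $\|\bm{H}_k^{(j)}\|_2 \le M$ for every output component $j$, which is justified because the kinematics are smooth and the joints range over a compact set. With this, Cauchy--Schwarz (equivalently the definition of the spectral norm) gives, for arbitrary increment vectors $\bm{a},\bm{b}$,
\begin{equation}
\bigl| \bm{a}^\top \bm{H}_k^{(j)} \bm{b} \bigr| \le M\,\|\bm{a}\|\,\|\bm{b}\|,
\end{equation}
so the Euclidean norm of the vector $\bm{a}^\top \bm{H}_k \bm{b}$ is at most $C\,\|\bm{a}\|\,\|\bm{b}\|$ with $C = \sqrt{m}\,M$ and $m$ the task dimension.

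Next I would apply this bound to each of the three terms in \eqref{eq:error_term_simplified}, setting $\|\Delta\bm{q}\| := \max\bigl(\|\Delta\bm{q}_{k+i}^{(1)}\|,\|\Delta\bm{q}_{k+i}^{(2)}\|\bigr)$ as the common scale referenced in the statement. Each of the three products of increment norms is then at most $\|\Delta\bm{q}\|^2$, and the triangle inequality yields
\begin{equation}
\|\bm{E}_{k+i}\| \le \tfrac{1}{2}C\bigl(\|\Delta\bm{q}\|^2 + \|\Delta\bm{q}\|^2 + \|\Delta\bm{q}\|^2\bigr) = \tfrac{3C}{2}\,\|\Delta\bm{q}\|^2,
\end{equation}
which is exactly the claimed $\mathcal{O}(\|\Delta\bm{q}\|^2)$ order.

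I expect no genuine obstacle here, since the heavy lifting was done in deriving \eqref{eq:error_term_simplified}; the remaining subtleties are notational. The main ones are interpreting the vector-valued Hessian $\bm{H}_k$ as a third-order tensor and fixing the norm in which boundedness is assumed, and choosing a single scale $\|\Delta\bm{q}\|$ that dominates both increments so the three quadratic terms can be collected uniformly. I would also state explicitly that the smallness hypothesis is what justifies dropping the third-order remainders of \eqref{eq:jacobian_expansion} and \eqref{eq:taylor_expansion} relative to $\bm{E}_{k+i}$, so that the resulting bound legitimately characterizes the leading discrepancy between the updated-Jacobian model and the true second-order expansion.
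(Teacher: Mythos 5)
Your proposal is correct and follows essentially the same route as the paper's own proof: starting from the simplified expression \eqref{eq:error_term_simplified}, bounding each of the three Hessian contractions by a norm product via the Hessian bound, and collecting them with the triangle inequality to conclude $\mathcal{O}\left(\|\Delta \bm{q}\|^2\right)$. Your treatment is in fact slightly more careful than the paper's — you make the third-order-tensor nature of $\bm{H}_k$ and the definition of the common scale $\|\Delta \bm{q}\|$ explicit, where the paper simply writes $\|\bm{H}_k\| \leq L_H$ and leaves $\|\Delta\bm{q}\|$ implicit — but these are refinements of the same argument, not a different one.
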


\begin{proof}
Let $L_H$ be a bound such that $\| \bm{H}_k \| \leq L_H$. Then:
\begin{align}
\| \bm{E}_{k+i} \| \leq\, & \dfrac{1}{2} L_H \Big( 
\| \Delta \bm{q}_{k+i}^{(1)} \|^2 
+ \| \Delta \bm{q}_{k+i}^{(2)} \|^2 \nonumber \\
& \quad + \| \Delta \bm{q}_{k+i}^{(1)} \| 
\| \Delta \bm{q}_{k+i}^{(2)} \| \Big).
\label{eq:error_norm_bound}
\end{align}

Since $\| \Delta \bm{q}_{k+i}^{(1)} \|$ and $\| \Delta \bm{q}_{k+i}^{(2)} \|$ are small, their squares and products are of order $\mathcal{O}\left( \| \Delta \bm{q} \|^2 \right)$.
\end{proof}

Therefore, the error introduced by neglecting higher-order terms is negligible.

\begin{remark}[Magnitude of Higher-Order Terms]
\label{remark:higher_order_terms}
In the case of the modular manipulator, to quantify the smallness of the joint position increments, consider a control system running at \(100\)~Hz (i.e., with a \(\Delta t = 0.01\)~s control period) and assume a maximum joint velocity of \(2\)~rad/s. The maximum change in joint position over one control step is then
\(
\|\Delta \bm{q}_{k+i}\|_{\max} \approx 2 \text{ rad/s} \times 0.01 \text{ s} = 0.02 \text{ rad.}
\)
Since the neglected second-order terms in our analysis scale as
\(
\mathcal{O}\!\bigl(\|\Delta \bm{q}_{k+i}\|^2\bigr),
\)
their magnitude is on the order of \(0.02^2 = 0.0004\)~rad\(^2\). Consequently, the additional error introduced by ignoring these higher-order terms is minor and does not significantly affect the control or the system's stability and performance.
\end{remark}

\begin{theorem}
The linearized kinematic model in the low-level MPC implicitly includes second-order terms from the Taylor expansion without the need for explicit calculation of the second-order derivatives.
\end{theorem}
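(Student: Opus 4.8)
The plan is to prove the statement by assembling the identities already derived and certifying the leftover error, rather than by any new computation. The conceptual key I would emphasize first is that the Jacobian is itself the gradient of the forward-kinematics map, $\mathbf{J}=\partial\bm{x}_e/\partial\bm{q}$, so the third-order tensor $\partial\mathbf{J}/\partial\bm{q}|_{\bm{q}_k}$ appearing in the Jacobian expansion \eqref{eq:jacobian_expansion} is exactly the Hessian $\bm{H}_k=\partial^2\bm{x}_e/\partial\bm{q}^2|_{\bm{q}_k}$ of \eqref{eq:taylor_expansion}. This single identification is what turns the bookkeeping of the relinearized model into a statement about second-order Taylor content, and I would state it explicitly at the outset so that the subsequent term-matching is unambiguous.

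With that in hand, the steps follow in order. First I would substitute the first-order expansion of the refreshed Jacobian \eqref{eq:jacobian_expansion} into the one-step low-level update \eqref{eq:second_mpc_model}, obtaining \eqref{eq:model_with_updated_jacobian} and, in particular, the emergent bilinear term $\bigl(\partial\mathbf{J}/\partial\bm{q}|_{\bm{q}_k}\,\Delta\bm{q}^{(1)}_{k+i}\bigr)\Delta\bm{q}^{(2)}_{k+i}$, which is quadratic in the joint increments even though the model is linear in the decision variable $\Delta\bm{q}^{(2)}_{k+i}$. Second, using Assumption~\ref{eq:assumption1} to write $\Delta\bm{q}_{k+i+1}=\Delta\bm{q}^{(1)}_{k+i}+\Delta\bm{q}^{(2)}_{k+i}$, I would expand the genuine second-order term $\tfrac12\Delta\bm{q}_{k+i+1}^\top\bm{H}_k\Delta\bm{q}_{k+i+1}$ of \eqref{eq:taylor_expansion} and match the emergent bilinear term against its cross component $\Delta\bm{q}^{(1)\top}_{k+i}\bm{H}_k\Delta\bm{q}^{(2)}_{k+i}$. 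Third, I would form the residual $\bm{E}_{k+i}$, reduce it to \eqref{eq:error_term_simplified}, invoke the preceding Lemma to conclude $\|\bm{E}_{k+i}\|=\mathcal{O}(\|\Delta\bm{q}\|^2)$, and cite Remark~\ref{remark:higher_order_terms} to bound its magnitude numerically. The conclusion is then immediate: the linear model reproduces the Hessian-weighted structure of the second-order term while only ever evaluating $\mathbf{J}$ at shifted configurations, so no second-order derivative is computed explicitly.

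The main obstacle is conceptual rather than computational: the residual \eqref{eq:error_term_simplified} is itself $\mathcal{O}(\|\Delta\bm{q}\|^2)$, the same order as the term being ``captured,'' so I must make precise in what sense second-order information is genuinely gained. The honest resolution I would present is that refreshing the Jacobian along the high-level trajectory corrects the model to first order in the large high-level increment $\Delta\bm{q}^{(1)}_{k+i}$---the increment a frozen-Jacobian linearization would mispredict most---leaving only a residual that is quadratic in the small secondary correction $\Delta\bm{q}^{(2)}_{k+i}$ together with the intra-high-level term $\tfrac12\Delta\bm{q}^{(1)\top}_{k+i}\bm{H}_k\Delta\bm{q}^{(1)}_{k+i}$, which is in turn absorbed by the successive relinearization performed at every node of the horizon in Alg.~\ref{alg:H_MPC_overall}. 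I would therefore close the argument by combining the per-step bound of the preceding Lemma with the horizon-wide relinearization to argue that the accumulated discrepancy stays at the order already neglected by any first-order scheme, which is exactly the content that Remark~\ref{remark:higher_order_terms} quantifies.
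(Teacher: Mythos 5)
Your proposal follows essentially the same route as the paper's own proof: substitute the relinearized Jacobian expansion \eqref{eq:jacobian_expansion} into the low-level model \eqref{eq:second_mpc_model}, match the resulting bilinear term $\left(\left.\partial\mathbf{J}/\partial\bm{q}\right|_{\bm{q}_k}\Delta\bm{q}^{(1)}_{k+i}\right)\Delta\bm{q}^{(2)}_{k+i}$ against the cross component of the true second-order Taylor term under Assumption~\ref{eq:assumption1}, and bound the residual $\bm{E}_{k+i}$ via the preceding Lemma and Remark~\ref{remark:higher_order_terms}. The two points you add---the explicit identification of $\left.\partial\mathbf{J}/\partial\bm{q}\right|_{\bm{q}_k}$ with the Hessian $\bm{H}_k$, which the paper uses only implicitly when forming \eqref{eq:error_term_expansion}, and the acknowledgment that the residual is of the same order as the captured cross term---are refinements of, not departures from, the paper's argument.
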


\begin{proof}
By updating the Jacobian $\mathbf{J}_{k+i}$ using the predicted joint positions $\bm{q}_{k+i}^{(1)}$ from the high-level MPC, the low-level MPC model includes the term $\left( \left. \dfrac{\partial \mathbf{J}}{\partial \bm{q}} \right|_{\bm{q}_k} \Delta \bm{q}_{k+i}^{(1)} \right) \Delta \bm{q}_{k+i}^{(2)}$, which captures part of the second-order effects in the end-effector position. The remaining error $\bm{E}_{k+i}$ is of order $\mathcal{O}\left( \| \Delta \bm{q} \|^2 \right)$ and is negligible under the assumption that joint position changes are small. Therefore, the low-level MPC implicitly includes second-order terms without explicitly computing the second derivatives.
\end{proof}

This approach enhances the fidelity of the kinematic model without significant computational overhead, as the low-level MPC retains its QP formulation by avoiding the explicit calculation of second derivatives in the approximated predictive kinematic model. The proof related to \textit{Linearized Acceleration Mapping Term} can obtained in Appendix.

\section{Evaluation}
To demonstrate that the controller is adaptable to various manipulator morphologies, we selected $5$ manipulators, each with distinct morphologies, to execute the same task under different scenarios. The DoF of these manipulators ranged from $4$ DoFs to $6$ DoFs, as shown in Fig.~\ref{fig:morphology_common}, with morphologies labeled from A to E. These manipulators feature different topological structures, specifically in the order of joint modular connections. The joint position limits for each module range from \(-2.75\) to \(2.75\) radians, with a maximum joint velocity of \(2.0 \, \text{rad/s}\) and a maximum joint acceleration of \(0.5 \, \text{m/s}^2\). 
The QP is solved using the off-the-shelf solver qpOASES~\cite{ferreau2014qpoases}.

\begin{figure}[h]
  \centering
\includegraphics[width=0.49\textwidth]{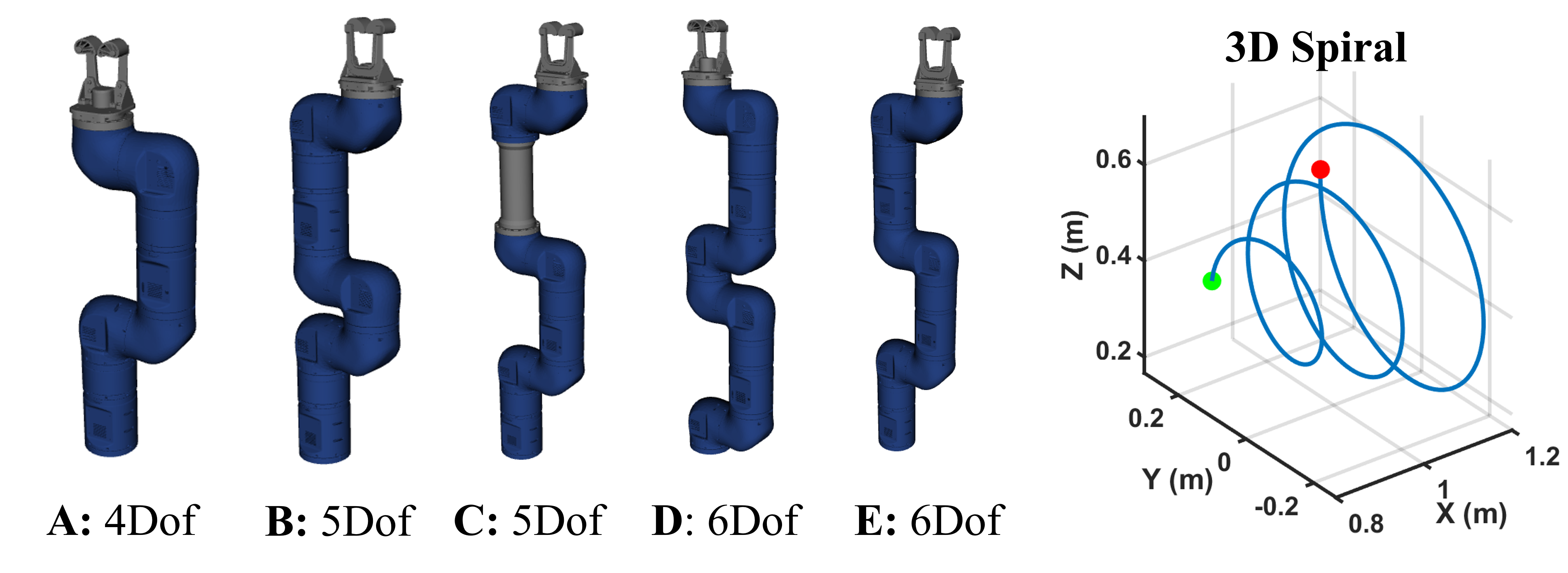}
  \vspace{-0.3cm}
  \caption{Morphology A to E. Left: Manipulator morphology. Right: Trajectory visualization. Green and red points represent the initial and final points of the trajectory.}
\label{fig:morphology_common}
  \vspace{-0.5cm} 
\end{figure}

\subsection{Comparison between MPC, HQP and HMPC}
In this section, we compare the execution performance of H-MPC, H-QP, and weight-based hierarchical MPC. H-QP optimizes the manipulator's joint states for a single step with a prediction horizon of N =1, without considering future trajectories. In contrast, weight-based hierarchical MPC (referred to as MPC here), as proposed by \cite{minniti2019whole}, manages tasks hierarchically by assigning higher gains to higher-priority tasks and lower gains to secondary tasks. Unlike H-MPC and H-QP, MPC combines all tasks into a single optimization problem with fixed weights based on task priority and does not dynamically update the predictive model in the low-level MPC. To evaluate these methods, we conducted a comparative study using five different manipulators, as illustrated in Fig.~\ref{fig:morphology_common}. The task involved executing a three-dimensional spiral trajectory, shown in Fig.~\ref{fig:morphology_common} (right). In the simulation, successful trajectory completion was not guaranteed due to potential singular configurations and joint limitations, which posed challenges for accurate tracking. In this task, position tracking was prioritized as the highest objective, while orientation tracking was secondary. 

\begin{figure}[h]
  \centering
\includegraphics[width=0.5\textwidth]{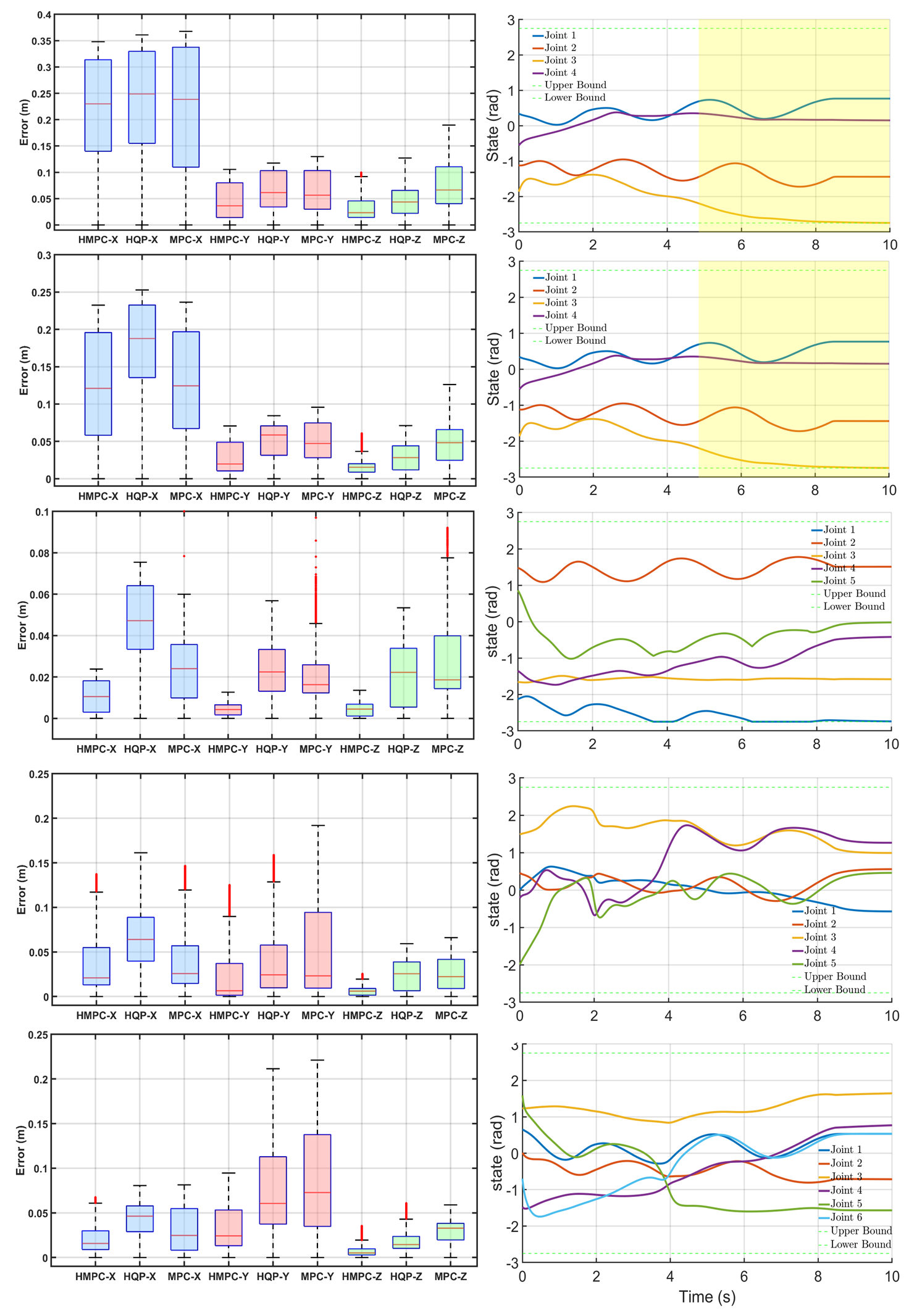}
  \caption{Left: box figure for tracking performance. Right: joint state for executing the task. 
  }
  \label{fig:box_plot}  
  \vspace{-0.5cm}
\end{figure}
\textbf{Performance Analysis}: 
We first analyze the robustness of tracking performance by comparing different control methods. To evaluate tracking performance, we recorded tracking errors in various manipulator morphologies by comparing the reference trajectory in the task space with the actual control tracking performance. The position-related tracking error in each control loop is calculated as \(\bm{e}_p = |\bm{p} - \bm{p}_{ref}|\), and the orientation tracking error is expressed as \(\bm{e}_o = | \log(\mathbf{R}\mathbf{R}_{ref}^T)^{\vee} | \), where \(\bm{p}\) denotes the current position of the manipulator's end-effector, \(\mathbf{R}\) represents the current orientation described by the rotation matrix, and \( (.)^{\vee}\) denotes the vectorization of the skew-symmetric part of a matrix. 
%
The box plots was plotted for comparing the tracking error of the high-level sub-tasks, while  joint state trajectory was plotted for the execution process of HMPC as show in the Fig.~\ref{fig:box_plot}. In the figure, the box plots reveal that H-MPC outperforms both MPC and HQP across all axes of the high-priority task in terms of median error. H-MPC consistently shows lower median errors, indicating closer adherence to desired outcomes with minimal deviations. Additionally, its narrower interquartile range (IQR) reflects a more concentrated and less variable error distribution. H-MPC also manages outliers more effectively, featuring fewer outliers and shorter whiskers, which highlights its robustness and ability to maintain tighter error bounds. Compared H-MPC and HQP, HMPC is outperformance in tracking trajectory as HMPC predicting future states and optimizing state trajectories that consider long-term objectives, rather than just the immediate ones. Meanwhile, H-MPC outperforms MPC by utilizing a dynamic predictive model, whereas MPC relies on a single, static linearized model. Additionally, unlike MPC, H-MPC does not require manual tuning of controller parameters, enhancing its adaptability and ease of use. For further details, readers can refer to the video \url{https://youtu.be/y1g8lXnXLPs}.

The joint-state trajectories across various morphologies are depicted in Fig.~\ref{fig:box_plot}. The joint states controlled by the H-MPC remain safely confined within predefined boundaries, as indicated by the dotted lines. Furthermore, the results demonstrates notable robustness when the robot's configuration approaches singularity, particularly in the yellow-highlighted regions near the workspace boundaries. For further details, readers can refer to the video \url{https://youtu.be/z7EIButAy74}.

\subsection{Pick-and-Place Experiments}

To demonstrate the effectiveness of the proposed H-MPC, especially in managing multi-phase manipulation tasks with non-redundant robots for executing specific task trajectories, we applied this control strategy in a real-world pick-and-place scenario. The experimental setup featured two distinct 5-DoF non-redundant manipulators and involved two different grasping orientations for handling a liquid-filled object. We carried out two experiments using the PINO modular manipulators, as detailed in~\cite{romiti2021toward}. The experimental process and its outcomes are showcased in an accompanying video, which can be viewed at \url{https://youtu.be/CSVVkLLrgqs}.

\section{Conclusion}
This work proposes a control policy based on H-MPC designed for modular systems that can adapt to various morphologies without requiring control parameter tuning. The H-MPC divides the controller into two levels: a high-level MPC and a low-level MPC. To ensure real-time control of the manipulator, each MPC formulates its optimization problem as a QP problem. In the low-level MPC, the linearized model is further updated using predictive results from the high-level MPC, enabling the incorporation of second-order information while maintaining a linearized formulation. We conduct extensive comparisons and experiments to demonstrate the effectiveness of the proposed control policy.

\section*{APPENDIX} ~\label{linear_acc}
\textbf{Inclusion of Second-Order Terms in Linearized Acceleration Mapping}: 
The end-effector acceleration is given by:
\begin{equation}
    \ddot{\bm{x}}_e = \mathbf{J}(\bm{q})\, \ddot{\bm{q}} + \dot{\mathbf{J}}(\bm{q}, \dot{\bm{q}})\, \dot{\bm{q}},
    \label{eq:end_effector_acceleration}
\end{equation}
where the time derivative of the Jacobian is:
\begin{equation}
    \dot{\mathbf{J}}(\bm{q}, \dot{\bm{q}}) = \frac{\partial \mathbf{J}}{\partial \bm{q}}\, \dot{\bm{q}}.
\end{equation}
The goal is to show that by updating $\mathbf{J}_{k+i}$ and $\dot{\mathbf{J}}_{k+i}$ using the predicted joint positions $\bm{q}_{k+i}^{(1)}$ and velocities $\dot{\bm{q}}_{k+i}^{(1)}$ from the high-level MPC, the low-level MPC implicitly includes second-order terms involving $\Delta \bm{q}$, $\Delta \dot{\bm{q}}$, and their products.


We perform a Taylor expansion of $\mathbf{J}(\bm{q})$ around $\bm{q}_k$:
\begin{equation}
    \mathbf{J}_{k+i} \approx \mathbf{J}_k + \left. \frac{\partial \mathbf{J}}{\partial \bm{q}} \right|_{\bm{q}_k} \Delta \bm{q}_{k+i}^{(1)} + \mathcal{O}\left( \| \Delta \bm{q}_{k+i}^{(1)} \|^2 \right),
    \label{eq:jacobian_expansion_acceleration}
\end{equation}
where $\Delta \bm{q}_{k+i}^{(1)} = \bm{q}_{k+i}^{(1)} - \bm{q}_k$.

Similarly, we expand $\dot{\mathbf{J}}(\bm{q}, \dot{\bm{q}})$ around $\bm{q}_k$ and $\dot{\bm{q}}_k$:
\begin{equation}
\begin{split}
\dot{\mathbf{J}}_{k+i} \approx \; & \dot{\mathbf{J}}_k  + \left. \frac{\partial \dot{\mathbf{J}}}{\partial \bm{q}} \right|_{\bm{q}_k} \Delta \bm{q}_{k+i}^{(1)}  + \left. \frac{\partial \dot{\mathbf{J}}}{\partial \dot{\bm{q}}} \right|_{\bm{q}_k} \Delta \dot{\bm{q}}_{k+i}^{(1)} \\
& + \mathcal{O}\Bigl( \| \Delta \bm{q}_{k+i}^{(1)} \|^2, \| \Delta \dot{\bm{q}}_{k+i}^{(1)} \|^2 \Bigr).
\end{split}
\label{eq:jacobian_derivative_expansion}
\end{equation}


We assume that the total joint position, velocity, and acceleration changes are approximated by the sum of the predicted changes from the high and low-level MPCs:
\begin{align}
    \Delta \bm{q}_{k+i} &\approx \Delta \bm{q}_{k+i}^{(1)} + \Delta \bm{q}_{k+i}^{(2)}, \label{eq:delta_q_total} \\
    \Delta \dot{\bm{q}}_{k+i} &\approx \Delta \dot{\bm{q}}_{k+i}^{(1)} + \Delta \dot{\bm{q}}_{k+i}^{(2)}, \label{eq:delta_dq_total} \\
    \ddot{\bm{q}}_{k+i} &\approx \ddot{\bm{q}}_{k+i}^{(1)} + \ddot{\bm{q}}_{k+i}^{(2)}. \label{eq:ddq_total}
\end{align}


Using equations \eqref{eq:end_effector_acceleration}, \eqref{eq:jacobian_expansion_acceleration}, and \eqref{eq:jacobian_derivative_expansion}, the end-effector acceleration at time $k+i$ can be approximated as:
\begin{equation}
\begin{aligned}
    \ddot{\bm{x}}_{e,k+i} &\approx \left( \mathbf{J}_k + \left. \frac{\partial \mathbf{J}}{\partial \bm{q}} \right|_{\bm{q}_k} \Delta \bm{q}_{k+i}^{(1)} \right) \left( \ddot{\bm{q}}_{k+i}^{(1)} + \ddot{\bm{q}}_{k+i}^{(2)} \right) \\
    &\quad + \left( \dot{\mathbf{J}}_k + \left. \frac{\partial \dot{\mathbf{J}}}{\partial \bm{q}} \right|_{\bm{q}_k} \Delta \bm{q}_{k+i}^{(1)} + \left. \frac{\partial \dot{\mathbf{J}}}{\partial \dot{\bm{q}}} \right|_{\bm{q}_k} \Delta \dot{\bm{q}}_{k+i}^{(1)} \right) \\
    & \left( \dot{\bm{q}}_{k+i}^{(1)} + \dot{\bm{q}}_{k+i}^{(2)} \right) + \mathcal{O}\left( \| \Delta \bm{q} \|^2 \right).
\end{aligned}
\label{eq:end_effector_acceleration_expanded}
\end{equation}

The second-level MPC uses the following model for predicting the end-effector acceleration:
\begin{equation}
    \ddot{\bm{x}}_{e,k+i}^{(2)} = \mathbf{J}_{k+i} \ddot{\bm{q}}_{k+i}^{(2)} + \dot{\mathbf{J}}_{k+i} \dot{\bm{q}}_{k+i}^{(2)}.
    \label{eq:second_mpc_acceleration_model}
\end{equation}
Substituting the expansions \eqref{eq:jacobian_expansion_acceleration} and \eqref{eq:jacobian_derivative_expansion} into \eqref{eq:second_mpc_acceleration_model}, we have:
\begin{equation}
\begin{aligned}
    \ddot{\bm{x}}_{e,k+i}^{(2)} &\approx \left( \mathbf{J}_k + \left. \frac{\partial \mathbf{J}}{\partial \bm{q}} \right|_{\bm{q}_k} \Delta \bm{q}_{k+i}^{(1)} \right) \ddot{\bm{q}}_{k+i}^{(2)} \\
    &\quad + \left( \dot{\mathbf{J}}_k + \left. \frac{\partial \dot{\mathbf{J}}}{\partial \bm{q}} \right|_{\bm{q}_k} \Delta \bm{q}_{k+i}^{(1)} + \left. \frac{\partial \dot{\mathbf{J}}}{\partial \dot{\bm{q}}} \right|_{\bm{q}_k} \Delta \dot{\bm{q}}_{k+i}^{(1)} \right) \dot{\bm{q}}_{k+i}^{(2)}.
\end{aligned}
\label{eq:second_mpc_acceleration_expanded}
\end{equation}


We define the error $\bm{E}_{k+i}$ between the actual end-effector acceleration $\ddot{\bm{x}}_{e,k+i}$ and the combined prediction from the high and low-level MPCs:
\begin{equation}
    \bm{E}_{k+i} = \ddot{\bm{x}}_{e,k+i} - \left( \ddot{\bm{x}}_{e,k+i}^{(1)} + \ddot{\bm{x}}_{e,k+i}^{(2)} \right),
\end{equation}
where $\ddot{\bm{x}}_{e,k+i}^{(1)} = \mathbf{J}_{k+i} \ddot{\bm{q}}_{k+i}^{(1)} + \dot{\mathbf{J}}_{k+i} \dot{\bm{q}}_{k+i}^{(1)}$.

Substituting \eqref{eq:end_effector_acceleration_expanded}, \eqref{eq:second_mpc_acceleration_expanded}, and the expression for $\ddot{\bm{x}}_{e,k+i}^{(1)}$, we have:
\begin{align}
\dot{\mathbf{J}}_{k+i} \approx\, & \dot{\mathbf{J}}_k 
+ \left. \frac{\partial \dot{\mathbf{J}}}{\partial \bm{q}} \right|_{\bm{q}_k} \Delta \bm{q}_{k+i}^{(1)} \nonumber \\
& + \left. \frac{\partial \dot{\mathbf{J}}}{\partial \dot{\bm{q}}} \right|_{\bm{q}_k} \Delta \dot{\bm{q}}_{k+i}^{(1)} 
+ \mathcal{O} \left( \| \Delta \bm{q}_{k+i}^{(1)} \|^2, \| \Delta \dot{\bm{q}}_{k+i}^{(1)} \|^2 \right).
\label{eq:jacobian_derivative_expansion}
\end{align}

After simplification and neglecting higher-order terms, the error reduces to:
\begin{equation}
    \bm{E}_{k+i} \approx \mathcal{O}\left( \| \Delta \bm{q}_{k+i}^{(1)} \| \| \ddot{\bm{q}}_{k+i}^{(1)} \|,\, \| \Delta \bm{q}_{k+i}^{(1)} \| \| \dot{\bm{q}}_{k+i}^{(1)} \| \right).
\end{equation}

\begin{lemma}
Under the assumption that the joint changes $\Delta \bm{q}_{k+i}^{(1)}$, $\Delta \dot{\bm{q}}_{k+i}^{(1)}$, $\ddot{\bm{q}}_{k+i}^{(1)}$, and $\dot{\bm{q}}_{k+i}^{(1)}$ are small, and that the derivatives of the Jacobian are bounded, the error $\bm{E}_{k+i}$ is of order $\mathcal{O}\left( \| \Delta \bm{q} \|^2 \right)$.
\end{lemma}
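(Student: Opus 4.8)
The plan is to reuse the template of the preceding (position) lemma: the appendix has already reduced the acceleration mismatch to its residual cross-terms, namely $\bm{E}_{k+i}\approx\mathcal{O}\bigl(\|\Delta\bm{q}_{k+i}^{(1)}\|\,\|\ddot{\bm{q}}_{k+i}^{(1)}\|,\ \|\Delta\bm{q}_{k+i}^{(1)}\|\,\|\dot{\bm{q}}_{k+i}^{(1)}\|\bigr)$, obtained by differencing the true acceleration \eqref{eq:end_effector_acceleration_expanded} against the combined high/low-level prediction \eqref{eq:second_mpc_acceleration_expanded} so that the matched first-order terms cancel. The only remaining work is therefore to turn this residual into an explicit norm bound of order $\mathcal{O}(\|\Delta\bm{q}\|^2)$.

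First I would fix uniform constants $L_{\mathbf{J}}$ and $L_{\dot{\mathbf{J}}}$ bounding the operator norms of $\partial\mathbf{J}/\partial\bm{q}$, $\partial\dot{\mathbf{J}}/\partial\bm{q}$ and $\partial\dot{\mathbf{J}}/\partial\dot{\bm{q}}$ at $\bm{q}_k$; these exist because the joint coordinates live in the compact box $[-2.75,2.75]^{n_j}$ on which the smooth kinematic maps and their derivatives are continuous, hence bounded. Applying the triangle inequality together with submultiplicativity $\|\mathbf{A}\bm{b}\|\le\|\mathbf{A}\|\,\|\bm{b}\|$ to each surviving term then gives $\|\bm{E}_{k+i}\|\le L_{\mathbf{J}}\,\|\Delta\bm{q}_{k+i}^{(1)}\|\,\|\ddot{\bm{q}}_{k+i}^{(1)}\|+L_{\dot{\mathbf{J}}}\,\|\Delta\bm{q}_{k+i}^{(1)}\|\,\|\dot{\bm{q}}_{k+i}^{(1)}\|$ plus the analogous terms carrying $\Delta\dot{\bm{q}}_{k+i}^{(1)}$. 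To read this off as a single second-order estimate I would set a common increment scale $\|\Delta\bm{q}\|:=\max\{\|\Delta\bm{q}_{k+i}^{(1)}\|,\|\Delta\dot{\bm{q}}_{k+i}^{(1)}\|\}$ and invoke the forward-Euler relations \eqref{eq:low_level_MPC_qdot} and \eqref{eq:low_level_MPC_q}, i.e. $\Delta\bm{q}\approx\dot{\bm{q}}\,\Delta t$ and $\Delta\dot{\bm{q}}\approx\ddot{\bm{q}}\,\Delta t$, so that within one control period each bounded velocity or acceleration factor carries an extra power of the increment scale and every product collapses to order $\mathcal{O}(\|\Delta\bm{q}\|^2)$.

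The step I expect to be the main obstacle is exactly this last conversion. Unlike the position lemma, where both factors in every product were genuine increments and the $\mathcal{O}(\|\Delta\bm{q}\|^2)$ estimate was immediate, here $\dot{\bm{q}}_{k+i}^{(1)}$ and $\ddot{\bm{q}}_{k+i}^{(1)}$ are $\mathcal{O}(1)$ bounded quantities rather than small increments, so $\|\Delta\bm{q}_{k+i}^{(1)}\|\,\|\dot{\bm{q}}_{k+i}^{(1)}\|$ is only second order once one commits to treating the per-step displacement $\|\Delta\bm{q}\|\approx\|\dot{\bm{q}}\|\,\Delta t$ as the single small parameter of the expansion. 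I would therefore make $\Delta t$ (equivalently $\|\Delta\bm{q}\|$) the explicit asymptotic variable, as already quantified in Remark~\ref{remark:higher_order_terms}, and conclude that all surviving products and squared increments are of common order $\mathcal{O}(\|\Delta\bm{q}\|^2)$, which completes the bound.
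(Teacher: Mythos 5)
Your setup matches the paper's proof almost step for step: the same reduction of the acceleration mismatch to the residual cross-terms, the same bounds $L_J$, $L_{\dot J}$ on the Jacobian derivatives (your compactness justification via the joint box $[-2.75,2.75]^{n_j}$ is a welcome addition the paper omits), and the same triangle-inequality/submultiplicativity estimate $\| \bm{E}_{k+i} \| \leq C \left( \| \Delta \bm{q}_{k+i}^{(1)} \| \, \| \ddot{\bm{q}}_{k+i}^{(1)} \| + \| \Delta \bm{q}_{k+i}^{(1)} \| \, \| \dot{\bm{q}}_{k+i}^{(1)} \| \right)$. The gap is in your final conversion step, and it is a real one. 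Substituting $\|\Delta \bm{q}\| \approx \|\dot{\bm{q}}\|\,\Delta t$ does not make the surviving products second order: it gives $\| \Delta \bm{q}_{k+i}^{(1)} \| \, \| \dot{\bm{q}}_{k+i}^{(1)} \| \approx \|\dot{\bm{q}}\|^2 \, \Delta t = \|\Delta \bm{q}\|^2 / \Delta t$, which is \emph{first} order in the small parameter $\Delta t$ (equivalently, first order in $\|\Delta \bm{q}\|$), not second. Your phrase ``each bounded velocity or acceleration factor carries an extra power of the increment scale'' is precisely where the argument fails: only genuine increments scale with $\Delta t$; the factors $\dot{\bm{q}}_{k+i}^{(1)}$ and $\ddot{\bm{q}}_{k+i}^{(1)}$ are, on your own premise, $\mathcal{O}(1)$ and contribute no power of $\Delta t$ at all. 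Concretely, with the numbers of Remark~\ref{remark:higher_order_terms}, $\|\Delta \bm{q}\| \, \|\dot{\bm{q}}\| \le 0.02 \times 2 = 0.04$ while $\|\Delta \bm{q}\|^2 = 4\times 10^{-4}$; the ratio is exactly $1/\Delta t = 100$ and diverges as $\Delta t \to 0$, so no constant can rescue an $\mathcal{O}(\|\Delta \bm{q}\|^2)$ bound under your reading.

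The paper closes this step by a different and blunter move: it takes the lemma's hypothesis literally, namely that $\dot{\bm{q}}_{k+i}^{(1)}$ and $\ddot{\bm{q}}_{k+i}^{(1)}$ are themselves \emph{small}, on the same footing as the increments, so every term in the bound is a product of two small factors and is declared $\mathcal{O}(\|\Delta \bm{q}\|^2)$ on that basis. You explicitly (and with sound physical instinct) reject this reading and keep velocities and accelerations merely bounded; but under that weaker hypothesis the claimed conclusion is false as an asymptotic statement in $\Delta t$ --- the error is genuinely $\mathcal{O}(\Delta t)$ --- so no correct argument can reach it. To repair the write-up you must either adopt the strong hypothesis as the paper does, stating clearly that the smallness of $\dot{\bm{q}}^{(1)}$ and $\ddot{\bm{q}}^{(1)}$ is \emph{assumed} rather than derived from the sampling rate, or else weaken what you prove to the first-order bound $\mathcal{O}(\|\Delta \bm{q}\|)$ that your own premises actually support.
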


\begin{proof}
Let $L_J$, $L_{\dot{J}}$ be bounds such that:
\begin{equation}
    \left\| \frac{\partial \mathbf{J}}{\partial \bm{q}} \right\| \leq L_J, \quad \left\| \frac{\partial \dot{\mathbf{J}}}{\partial \bm{q}} \right\| \leq L_{\dot{J}}, \quad \left\| \frac{\partial \dot{\mathbf{J}}}{\partial \dot{\bm{q}}} \right\| \leq L_{\dot{J}}.
\end{equation}

Then the error norm satisfies:
\begin{equation}
    \| \bm{E}_{k+i} \| \leq C \left( \| \Delta \bm{q}_{k+i}^{(1)} \| \| \ddot{\bm{q}}_{k+i}^{(1)} \| + \| \Delta \bm{q}_{k+i}^{(1)} \| \| \dot{\bm{q}}_{k+i}^{(1)} \| \right),
\end{equation}
where $C$ is a constant depending on $L_J$ and $L_{\dot{J}}$.

Since $\Delta \bm{q}_{k+i}^{(1)}$, $\ddot{\bm{q}}_{k+i}^{(1)}$, and $\dot{\bm{q}}_{k+i}^{(1)}$ are small due to high control update rates, their products are of order $\mathcal{O}\left( \| \Delta \bm{q} \|^2 \right)$.
\end{proof}


By updating $\mathbf{J}_{k+i}$ and $\dot{\mathbf{J}}_{k+i}$ using $\bm{q}_{k+i}^{(1)}$ and $\dot{\bm{q}}_{k+i}^{(1)}$, the second-level MPC includes second-order terms involving $\Delta \bm{q}_{k+i}^{(1)}$, $\Delta \dot{\bm{q}}_{k+i}^{(1)}$, $\ddot{\bm{q}}_{k+i}^{(2)}$, and $\dot{\bm{q}}_{k+i}^{(2)}$. The error introduced by neglecting higher-order terms is negligible, being of order $\mathcal{O}\left( \| \Delta \bm{q} \|^2 \right)$. These second-order terms capture the curvature and dynamic changes in the manipulator's kinematics, improving the fidelity of the acceleration mapping without the explicit calculation of second derivatives.

In summary, the implicit inclusion of second-order terms through the updates of the Jacobian $\mathbf{J}$ and its derivative $\dot{\mathbf{J}}$ enhances the accuracy of the low-level MPC's control model, while keeping the computational complexity manageable.


\bibliographystyle{ieeetr}
\bibliography{reference}

\end{document}